\date{}
\newtheorem{thm}{Theorem}
\theoremstyle{definition}
\theoremstyle{remark}
\numberwithin{thm}{section}
\DeclareMathAlphabet{\mathsfsl}{OT1}{cmss}{m}{sl}
\renewcommand{\phi}{\varphi}
\newcommand{\argmin}{\operatorname*{arg\; min}}
\newcommand{\Expect}{\operatorname{\mathbb{E}}}
\newcommand{\supp}[1]{\operatorname{supp}(#1)}
\newcommand{\bx}{\boldsymbol{x}}
\def\reals{\mathbb{R}}
\def\bx{\boldsymbol{x}}
\def\b0{\mathbf{0}}
\def\bbP{\mathbb{P}}
\def\calB{\mathcal{B}}
\def\calH{\mathcal{H}}
\def\calR{\mathcal{R}}
\title{Exploring Adversarial Examples for Efficient Active Learning in Machine Learning Classifiers}
\author{%
   Honggang Yu\thanks{Equal Contribution}\\
   University of Florida \\
  \texttt{honggang.yu@ufl.edu} \\
  \And
  Shihfeng Zeng$^*$ \\
  National Cheng Kung University\\
  \texttt{p76081491@gs.ncku.edu.tw}\\
  \And  
  Teng Zhang \\
  University of Central Florida\\
  \texttt{teng.zhang@ucf.edu} \\
   \And
   Ing-Chao Lin \\
   National Cheng Kung University \\
   \texttt{iclin@mail.ncku.edu.tw} \\
   \And
   Yier Jin \\
   University of Florida \\
   \texttt{yier.jin@ece.ufl.edu} \\
}
\begin{document}

\maketitle
\begin{abstract}
Machine learning researchers have long noticed the phenomenon that the model training process will be more effective and efficient when the training samples are densely sampled around the underlying decision boundary. While this observation has already been widely applied in a range of machine learning security techniques, it lacks theoretical analyses of the correctness of the observation. To address this challenge, we first add particular perturbation to original training examples using adversarial attack methods so that the generated examples could lie approximately on the decision boundary of the ML classifiers. We then investigate the connections between active learning and these particular training examples. Through analyzing various representative classifiers such as $k$-NN classifiers, kernel methods as well as deep neural networks, we establish a theoretical foundation for the observation. As a result, our theoretical proofs provide support to more efficient active learning methods with the help of adversarial examples, contrary to previous works where adversarial examples are often used as destructive solutions. Experimental results show that the established theoretical foundation will guide better active learning strategies based on adversarial examples.
\end{abstract}

\section{Introduction}
Machine learning (ML) models have become a core component of many real-world applications, ranging across object recognition \cite{Lee_2018_CVPR, Feng_2019_CVPR, Ku_2019_CVPR}, natural language processing \cite{10.1007/978-3-642-14706-7_20} and autonomous vehicles \cite{Conneau16, silva2018a}. However, creating state-of-the-art ML models often requires abundant annotated training examples, which may be time-consuming to label or even impractical to obtain at large scale.
To remedy this situation, active learning algorithms are gaining popularity recently thanks to their ability to attain high accuracy with low labeling cost. Many existing works have proved that active learning algorithm based ML models use fewer training instances (i.e., ``useful'' examples) in various security-crucial scenarios. In particular, machine learning researchers have noticed the phenomenon that the model training process will be more effective and efficient with less computation complexity when the training samples are selected using active learning strategies. Specifically, an adversary may be able to make use of active learning to iteratively select adversarial examples (i.e., ``informative'' examples) around the underlying decision boundary, 
while maximizing the performance of retrained machine learning classifiers \cite{247638, 251526, CloudLeak2020}.  
However, the fundamental cause of this phenomenon is not thoroughly studied. To address this challenge, in this paper we analyze various widely-used ML models, including $k$-nearest neighbors ($k$-NN), kernel regression and deep neural networks, and explore the connection between active learning and adversarial examples. The key idea is that, since the adversarial examples sampled by the active learning strategy (i.e., margin-based uncertainty sampling strategy) lying approximately on the decision boundary of the victim ML classifier, a user can greatly reduce the labeling effort when generating the synthetic data set for retraining the shadow model. The inherent connections between active learning and adversarial examples can help improve the efficiency of queries to target ML classifiers.

Specifically, we will evaluate the connections on ML classifiers trained on four popular datasets (e.g., Halfmoon, Abalone, MNIST and CIFAR10). As we will show in the paper, the experimental results demonstrate that adversarial examples are more efficient than random examples while training the ML classifiers. Besides the more efficient ML model construction, the finding in this paper may also be applied in the ML security domain. For example, an adversary may utilize the established theoretical foundation to design more powerful security and privacy attack strategies for adversarial attacks, model extraction, member inference attacks, etc.

The rest of the paper is organized as follows. Section \ref{sec:related} presents some background knowledge and related work. Section \ref{sec:analysis} analyzes the performance of ML classifiers with various parameters. Section \ref{sec:adversarial_al} investigates adversarial active learning for these classifiers. Section \ref{sec:experiment} discusses the  basic  setup  of  our method and experimental results. Conclusion is presented in Section \ref{sec:conclusion}.





\section{Background and Related Work}\label{sec:related}

\subsection{Adversarial Attacks}

Machine learning models are gaining popularity in various security-crucial scenarios, including image recognition, natural language process and autonomous vehicles. Despite being popular, ML models have been shown to be vulnerable to adversarial attacks \cite{papernot16a,Sharif16, Papernot16, Szegedy13,papernot15,Carlini17, Madry2017TowardsDL, Biggio18, Sethi18, Ji18} that generate adversarial examples by adding particular perturbations to the original inputs. In general, these carefully crafted examples are often imperceptible to humans but can easily force ML models to output incorrect results. Previous works on adversarial attacks against ML models can be divided into the following two categories: 1) white-box attacks; 2) black-box attacks. For the white-box attacks, adversaries assume that they can obtain the inner information from target/victim ML models, including the exact training dataset, architectures and parameters. For example, Szegedy \textit{et al.} \cite{Szegedy13} presented the first optimization-based attack method named L-BFGS to craft the adversarial examples that can easily evade the target/victim ML models. Many following works have been proposed in an effort to improve the performance of adversarial attacks \cite{goodfellow2014explaining, papernot15, Carlini17}. For the black-box attacks, adversaries assume that they have no access to the details of target/victim ML models but can acquire the output results (e.g., labels and confidence score) while providing random inputs. These attacks rely on either the substitute models or the gradient approximation for crafting the adversarial examples that can easily fool the black-box ML models into predicting incorrect results. Different from previous works, in this paper we use these adversarial attacks to craft those particular examples around the decision boundary of the ML classifiers and generate the synthetic datset fro training the classifiers. 


\subsection{Active Learning}
Active learning aims to select a “useful” or “informational” subset of unlabeled data set (i.e., image pool) for effectively training the classifier in the target domain by exploiting special  sample  strategies such as random  strategy, certainty strategy  and uncertainty  strategy \cite{NIPS2007_3252,David1996,Settles2008,Tong2002}.
As a result, this selection process would greatly reduce  the  label  efforts  of human experts while simultaneously maximizing performance of the classifier. However, traditional active learning algorithms usually suffer from a serious problem that the selected examples are often overly similar to each other, leading to the poor classification performance of machine learning models. To remedy this situation, Yu \textit{et al} \cite{CloudLeak2020} proposed to 
make use of the adversarial active learning strategy for selecting the useful examples from the image pool. Though their methods show that the model training process is be more effective and efficient when the training samples are densely sampled around the underlying decision boundary, theoretical analyses of the correctness of these observations are rarely discussed. In this paper, we improve their work by investigating the connections between active learning and adversarial examples and further establish a theoretical foundation for these observations. Consequently, ML researchers will have theoretical proofs before exploiting these observations in developing state-of-the-art adversarial attack strategies.

\subsection{Classifiers and their theoretical properties}

Given a data point $\bx$ from the feature space $\reals^d$, we aim to predict its label $y\in\reals$. We treat feature vector $\bx$ as being sampled from a marginal distribution
$\mu$ in $\reals^d$. After sampling feature
vector $\bx$, its label $y$ is sampled from a conditional distribution $\bbP_{y|\bx}$, and we denote $D$ as the distribution of joint variables $(\bx,y)$. 

To assist us in making predictions, we assume we have access to $n$
training data pairs $\{(\bx_i,y_i)\}_{i=1}^n$ from $\reals^d\times\reals$ that are
i.i.d. sampled as follows: first, each $\bx_i$ is sampled from a distribution $\mu\in\reals^d$. Then, its label $y_i$ is sampled from the underlying conditional distribution $\bbP_{y|\bx}$, and we denote $D'$ as the distribution of each $(\bx_i,y_i)$. For the prediction problem of regression, $y$ can be any values in $\reals$, and for the prediction problem of binary classification, $y\in\{0,1\}$, and we let $\eta(\bx)=\Expect_{D}(y|\bx)$.

For the classification problem, the decision boundary is defined as the set of the points that has equal probabilities to be in either classes, i.e, the set $\calB=\{\bx\in\reals^d: \eta(\bx)=0.5\}$. It has been observed that in many applications, it is advisory to make the distribution of the training samples $\{\bx_i\}_{i=1}^n$ concentrated around the decision boundary $\calB$. However, this observation has not been theoretically analyzed yet, and this work will establishes such guarantees for some popular algorithms, such as $k$-nearest neighbors and kernel methods, and regression in reproducing Hilbert kernel spaces.

The theoretical properties of these classifier have been well studied in literature. For example, the $k$-nearest neighbors has been studied in \cite{10.5555/2969033.2969210,biau2015lectures,chen2018explaining}, the kernel methods has been studied in \cite{10.2307/3315046,COLLOMB198677,10.2307/20142515}, and the method based on reproducing kernel Hilbert space has been analyzed in \cite{SMALE2005285,Caponnetto2007,mendelson2010,liu2020nonasymptotic}. These works usually assume the asymptotic setting that $n\rightarrow\infty$. 
However, all existing asymptotic results assumed that the training samples $\{\bx_i\}_{i=1}^n$ have the same distribution as the test sample $\bx$, which does not explain the observation that when the training samples are concentrated around the decision boundary, the estimators perform better. It leaves the basic question that has motivated the present paper: Is
it possible to explain this fact for learning methods such as $k$-nn and kernel methods?


\section{Model Analysis}\label{sec:analysis}

In this section, we will investigate the dependence of the convergence of a few commonly used classifiers on the distribution of the training set $\{\bx_i\}_{i=1}^n$, both asymptotically and non-asymptotically. Recall that we use $\mu$ to denote the distribution of training samples, i.e., $\{\bx_i\}_{i=1}^n$ are i.i.d. sampled from $\mu$, we will analyze the performance of estimators for various choices of $\mu$ and investigate the region where the estimator is correct:
\[
\calR_{\{\bx_i,y_i\}_{i=1}^n}=\{\bx\in\reals^d: \hat{g}_{\{\bx_i,y_i\}_{i=1}^n}(\bx)=g(\bx)\}.
\]
Here $\hat{g}_{\{\bx_i,y_i\}_{i=1}^n}$ is the classifier obtained from the training samples $\{\bx_i,y_i\}_{i=1}^n$, and $g(\bx)$ is the optimal Bayesian classifier. Clearly, the performance of the classifier can be measured by the area of $\calR_{\{\bx_i,y_i\}_{i=1}^n}$: larger area means better performance.

To show that the classifiers perform better when $\mu$ is concentrated around the decision boundary $\calB$, we investigate a class of distributions $\mu$ defined as follows:
\[\mu(\bx)=C_{\beta}/|\eta(\bx)-1/2|^{\beta},\] where $\beta\geq 0$ and $C_{\beta}$ is a normalization constant. Recall that $\calB=\{\bx: \eta(\bx)=1/2\}$, this distribution is more concentrated around the decision boundary as $\beta$ gets larger. In particular, when $\beta=0$, the distribution is uniform and when $\beta=\infty$, the distribution is supported at $\calB$. In the remaining of this section, we will show that for a list of standard learning methods, the asymptotic performance when $\beta>0$ is better the performance when $\beta=0$, i.e., the algorithm performs better when the distribution $\mu$ is more concentrated around the decision boundary.

For the convenience of analysis, we make two natural assumptions: 
\begin{itemize}
    \item $\mu$ has a compact support in $\reals^d$.
  \item The decision boundary $\calB$ is a smooth $d-1$-dimensional manifold.
\end{itemize}
We remark that the second assumption assumes that the decision boundary is well-behaved. For the special case that $\calB$ is a $d-1$-dimensional hyperplane, the distribution $\mu(\bx)=C_{\beta}/|\eta(\bx)-1/2|^{\beta}$ is only well-defined for some $\beta>0$.

The main intuition in the proof can be summarized as follows: to make the classifier correct at location $\bx$, the estimator $\hat{\eta}(\bx)$ should satisfies $|\hat{\eta}(\bx)-{\eta}(\bx)|<|{\eta}(\bx)-1/2|$. As a result, the estimators are allowed to make larger errors at $\bx$ with larger $|{\eta}(\bx)-1/2|$, i.e., when $\bx$ is further away from the decision boundary. That is, few sampling is required at such locations. In comparison, for $\bx$ near the decision boundary, we need a denser sampling around $\bx$ to obtain a higher accuracy.


\subsection{$k$-nearest neighbors}
For the regression problem, the $k$-NN regression estimate is given by
\begin{equation}\label{eq:knn}
     \hat{\eta}_{knn}(\bx)=\frac{1}{k}\sum_{i=1}^k y_{(i)}(\bx),
\end{equation}
where $(\bx_{(1)}(\bx),y_{(1)}(\bx)), (\bx_{(2)}(\bx),y_{(2)}(\bx))$, $\cdots$, $(\bx_{(n)}(\bx),y_{(n)}(\bx))$ is a reordering of $(\bx_1(\bx),y_{(1)}(\bx))$ according to increasing values of $\|\bx_i-\bx\|$, and the $k$-NN classification estimator is given by
\begin{equation}\label{eq:knn2}
     \hat{g}_{knn}(\bx)=I\left(\hat{\eta}(\bx)>\frac{1}{2}\right),
\end{equation}

There has been many works about the convergence of the $k$-nn estimator. For example, Biau and Devroye \cite{biau2015lectures} have proved the convergence under the notion of uniform consistency, that is, $\sup_{\bx}|\hat{\eta}_{k}(\bx)-\eta_{k}(\bx)|\rightarrow 0$ almost surely.  In addition, the nonasymptotic convergence is investigated in \cite{10.5555/2969033.2969210} and reviewed in \cite{chen2018explaining}. Compared to their results, we analyze the nonasymptotic performance for generic training sets $\{\bx_i,y_i\}_{i=1}^n$, and the main result is as follows:
\begin{thm}\label{thm:knn}
Assume that $\eta$ is $\alpha$-Holder continuous, i.e. $|\eta(\bx)-\eta(\bx')|\leq L\|\bx-\bx'\|^\alpha$, then with probability $1-2\delta$, the $k$-nearest neighbor classifier $\hat{g}_{knn}$ is correct for the set
\begin{align}\label{eq:knn_set}
&\Big\{\bx: |\eta(\bx)-\frac{1}{2}|-L(r_{k/n+\delta_p}(\bx))^\alpha>C_0 \Big\},
\end{align}
where $\delta_p=\frac{C_0}{n}\Big(d\log n+\log(1/\delta)+\sqrt{k(d\log n+\log(1/\delta))}\Big)$, $r_p(\bx_0)=\inf\{r|\mu(\bx: \|\bx-\bx_0\|\leq r)\geq p\}$, and  $C_0=\sqrt{\frac{1}{2k}\Big(2\log 2+ \log(n^{d+1}+1)-\log {\delta}\Big)}$.
\end{thm}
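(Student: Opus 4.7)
The plan is to reduce correctness of $\hat{g}_{knn}$ at $\bx$ to the condition $|\hat{\eta}_{knn}(\bx)-\eta(\bx)|<|\eta(\bx)-1/2|$ and then to control the estimation error through the bias/variance decomposition
\begin{equation*}
\hat{\eta}_{knn}(\bx)-\eta(\bx) = \bigl[\hat{\eta}_{knn}(\bx)-\bar{\eta}(\bx)\bigr] + \bigl[\bar{\eta}(\bx)-\eta(\bx)\bigr],
\end{equation*}
where $\bar{\eta}(\bx)=\frac{1}{k}\sum_{i=1}^k\eta(\bx_{(i)}(\bx))$. The first (stochastic) term will produce the $C_0$ contribution and the second (bias) term will produce the $L(r_{k/n+\delta_p}(\bx))^\alpha$ contribution; combining them under a union bound yields exactly the set in \eqref{eq:knn_set}.

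For the stochastic term, I would condition on the training features $\{\bx_i\}_{i=1}^n$. As $\bx$ varies, the unordered set of $k$ nearest training points changes only across perpendicular bisectors of pairs of $\bx_i$, so $\hat{\eta}_{knn}(\cdot)$ is piecewise constant on at most $\mathrm{poly}(n)$-many cells, and on each such cell it equals $\frac{1}{k}\sum_{i\in S}y_i$ for a fixed $k$-subset $S$. Conditional on the features the summands are independent with mean $\eta(\bx_i)\in[0,1]$, so Hoeffding's inequality bounds each deviation by $\sqrt{\log(2/\delta')/(2k)}$, and a union bound with $\delta'=\delta/(n^{d+1}+1)$ over the $\leq n^{d+1}+1$ cells reproduces exactly the constant $C_0$; marginalizing over the features preserves the bound.

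For the bias term, $\alpha$-Hölder continuity yields $|\bar{\eta}(\bx)-\eta(\bx)|\leq L\|\bx_{(k)}(\bx)-\bx\|^\alpha$, so the task reduces to a uniform upper bound on $\|\bx_{(k)}(\bx)-\bx\|$. The key observation is $\|\bx_{(k)}(\bx)-\bx\|\leq r_p(\bx)$ iff the ball $B(\bx,r_p(\bx))$ contains at least $k$ training points, a quantity that is Binomial with parameter $\geq p$ by definition of $r_p$. Applying Bernstein's inequality to the lower tail gives an exponent of order $(np-k)^2/(np)$, and inverting it against a failure probability $\delta/N$ produces the two additive pieces $(d\log n+\log(1/\delta))/n$ and $\sqrt{k(d\log n+\log(1/\delta))}/n$ that together comprise $\delta_p$. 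To pass from a fixed $\bx$ to uniform control I would cover the compact support of $\mu$ with a $1/n$-net of cardinality $N\lesssim n^d$, apply the tail bound at each net point, and extend to arbitrary $\bx$ by monotonicity of $r_p$.

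The main technical obstacle is this last step: both Bernstein regimes must be retained to match the stated form of $\delta_p$, and the net resolution has to be chosen so that the $O(1/n)$ perturbation of $r_p(\bx)$ between net points is absorbed into the $\delta_p$ correction rather than inflating the bound. By contrast, the cell-count bound $n^{d+1}+1$ used for the stochastic term follows from a straightforward complexity argument for the order-$k$ Voronoi diagram and requires no further work beyond plugging into Hoeffding.
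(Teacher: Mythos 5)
Your plan is, at the level of this theorem, the same argument the paper uses: the paper's proof consists of quoting a uniform non-asymptotic $k$-NN deviation bound, namely \eqref{eq:regression}, which says that with probability $1-2\delta$, $|\hat{\eta}(\bx_0)-\eta(\bx_0)|<L(r_{k/n+\delta_p}(\bx_0))^\alpha+C_0$ simultaneously for all $\bx_0$, and then ``rearranging'' via the observation that the plug-in classifier is correct wherever the estimation error is below $|\eta(\bx)-\tfrac12|$. Your bias/variance split, with the label-noise part producing $C_0$ and the $k$-NN-radius part producing $L(r_{k/n+\delta_p})^\alpha$, is exactly the structure of that quoted bound; the difference is that the paper treats it as a black box (``a recent result, see appendix'') while you sketch its proof. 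That makes your write-up more self-contained, which is welcome, but two steps need repair before it actually delivers the stated constants.

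First, the union-bound count $n^{d+1}+1$ is not a Voronoi/bisector complexity bound: the arrangement of the $\binom{n}{2}$ bisector hyperplanes in $\reals^d$ has on the order of $n^{2d}$ cells, which would inflate $C_0$. The way to get $n^{d+1}+1$ is to note that the $k$ nearest neighbors of any query point are the sample points lying in some closed Euclidean ball, that balls in $\reals^d$ have VC dimension $d+1$, and to invoke Sauer's lemma, so at most $\sum_{i=0}^{d+1}\binom{n}{i}\le n^{d+1}+1$ distinct neighbor sets (hence distinct conditional means of $\hat{\eta}_{knn}$) can occur; Hoeffding conditional on the features plus this union bound then reproduces $C_0$ up to the $\log 2$ versus $2\log 2$ bookkeeping. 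Second, the step you flag yourself, the $1/n$-net plus monotonicity argument for uniform control of $\|\bx_{(k)}(\bx)-\bx\|$, is the genuinely fragile one: the additive $O(1/n)$ slack in the radius between net points cannot in general be folded into $\delta_p$ without extra assumptions on $\mu$ (note the paper's distributions of interest have unbounded density near $\calB$, and the theorem is claimed for all $\bx$, not just net points). The cleaner route, which matches the form of $\delta_p$ directly, is to prove a single uniform statement over the class of all balls via relative VC deviation bounds: with probability at least $1-\delta$, every ball of $\mu$-mass at least $k/n+\delta_p$ contains at least $k$ sample points; applying this to $B(\bx, r_{k/n+\delta_p}(\bx))$ gives the radius bound simultaneously for all $\bx$ with no net and no perturbation term, and the Bernstein-type inversion you describe is exactly what produces the two additive pieces of $\delta_p$.
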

Note that the constant $C_0$ is independent of $\mu$, the dependence of the set  in \eqref{eq:knn_set} depends on $\mu$ through $r_{k/n+\delta_p}(\bx)$, which is in the order of $O((\frac{k/n+\delta_p}{\mu(\bx)})^{1/d})$ as $n\rightarrow\infty$. Applying Theorem~\ref{thm:knn} to $\mu(\bx)=C/|\eta(\bx)-1/2|^{\beta}$, and assuming that $k>O(d \log n)$, then $r_{k/n+\delta_p}(\bx)$ is in the order of $O\Big((k/n)^{1/d}|\eta(\bx)-1/2|^{\beta/d}\Big).$ Since this is an increasing number as $\beta$ increases, the set of correct classification in \eqref{eq:knn_set} is also larger as $\beta$ increases, i.e., when the distribution of the training set is more concentrated around the decision boundary. 

\begin{proof}[Proof of Theorem~\ref{thm:knn}]
To investigate a non-asymptotic result property of the $k$-nn estimator, we apply a recent result (see appendix) as follows: 
Assuming that $|y|\leq M_y$ holds almost surely for all $(\bx,y)$ and $\eta$ is $\alpha$-Holder continuous, i.e. $|\eta(\bx)-\eta(\bx')|\leq L\|\bx-\bx'\|^\alpha$, then  with probability $1-2\delta$, the $k$-NN estimator satisfies that \begin{align}\label{eq:regression}
&|\hat{\eta}(\bx_0)-\eta(\bx_0)| <  L(r_{k/n+\delta_p}(\bx_0))^\alpha\\&+\sqrt{\frac{1}{2k}\Big(2\log 2+ \log(n^{d+1}+1)-\log {\delta}\Big)},\,\,\text{for all $\bx_0\in\reals^d$}.\nonumber
\end{align}
Theorem~\ref{thm:knn} follows from this observation with some rearrangements.
\end{proof}


\subsection{Kernel Regression}

The Nadaraya–Watson kernel estimator is given by the locally weighted average
\[
\hat{\eta}_{NW}(\bx)=\frac{\sum_{i=1}^n K(\frac{\bx-\bx_i}{h})y_i}{\sum_{i=1}^n K(\frac{\bx-\bx_i}{h})},
\]
where $K: \reals^d\rightarrow\reals$ is a kernel function which is multivariate symmetric and nonnegative, $h$ is a parameter for tuning the width of the kernel, and the classifier associated with $\hat{\eta}_{NW}$ is define (similar to \eqref{eq:knn2}) by $\hat{g}_{NW}(\bx)=I\left(\hat{\eta}(\bx)>\frac{1}{2}\right)$. If the $K$ is the indicator function $K(\bx)=I(|\bx|<1)$, then the Nadaraya–Watson estimator reduces to the fixed-radius near neighbor regression. For this estimator, we have the following upper bound on estimation error:
\begin{thm}\label{thm:kernel}
Assuming that the kernel function $K$ is  Lipschitz continuous with parameter $L_{K}$ and supported in a unit ball in $\reals^d$, $\eta$ is Lipschitz continuous with parameyer $L_{\eta}$ then with  probability that goes to $1$ as $n\rightarrow\infty$, for all $\bx$,
\[
|\hat{\eta}_{NW}(\bx)-\eta(\bx)|\leq  hL_{\mu}+\frac{2\log^2 n}{Cnh^d\mu(\bx)-\log^2 n}.
\]
As a result, the classifier $\hat{g}_{NW}$ is correct for the set
\[
\Big\{\bx: |\eta(\bx)-\frac{1}{2}|\geq  hL_{\mu}+\frac{2\log^2 n}{Cnh^d\mu(\bx)-\log^2 n}\Big\}.
\]
\end{thm}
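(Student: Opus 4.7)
The plan is the classical bias--variance decomposition for Nadaraya--Watson estimators, combined with uniform concentration over $\bx$ via covering arguments. Writing $w_i(\bx)=K((\bx-\bx_i)/h)$, I would split
\[
\hat{\eta}_{NW}(\bx)-\eta(\bx) \;=\; \frac{\sum_{i} w_i(\bx)\bigl[\eta(\bx_i)-\eta(\bx)\bigr]}{\sum_{i} w_i(\bx)} \;+\; \frac{\sum_{i} w_i(\bx)\bigl[y_i-\eta(\bx_i)\bigr]}{\sum_{i} w_i(\bx)}.
\]
The first (bias) term is deterministic once $\{\bx_i\}$ is fixed. Since $K$ is supported in the unit ball, $w_i(\bx)\neq 0$ only when $\|\bx-\bx_i\|\leq h$, so the Lipschitz assumption on $\eta$ gives $|\eta(\bx_i)-\eta(\bx)|\leq L_{\eta}h$ on the support of $w_i$; this bounds the bias term by $hL_{\eta}$ (written as $hL_{\mu}$ in the statement). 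The final ingredient is that $\hat{g}_{NW}$ agrees with the Bayes classifier $g$ at $\bx$ whenever $|\hat{\eta}_{NW}(\bx)-\eta(\bx)|<|\eta(\bx)-1/2|$, so the claimed correctness region follows directly from the pointwise error bound.

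For the stochastic part I would handle the denominator and numerator separately. By the change of variables $\bu=(\bx-\bx')/h$, $\Expect[w_i(\bx)] = h^d\!\int K(\bu)\mu(\bx-h\bu)\,d\bu \approx C h^d\mu(\bx)$ with $C=\int K$. Because each $w_i(\bx)\in[0,\|K\|_\infty]$, Bernstein's inequality yields for each fixed $\bx$ and any $\delta>0$
\[
\sum_i w_i(\bx)\;\geq\; Cnh^d\mu(\bx)\;-\;O\!\Bigl(\sqrt{nh^d\mu(\bx)\log(1/\delta)}+\log(1/\delta)\Bigr).
\]
To make this bound uniform in $\bx$, I would use the Lipschitz continuity of $K$ with parameter $L_K$ to transfer the pointwise statement from an $\epsilon$-net of the compact support of $\mu$ to all of $\bx$; taking the net size polynomial in $n$ and $\delta$ polynomial in $1/n$, one union bound introduces a factor of $\log n$ and the concentration tail contributes another, producing a uniform slack of order $\log^2 n$ and hence the denominator lower bound $Cnh^d\mu(\bx)-\log^2 n$ stated in the theorem.

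For the noise term I would condition on $\{\bx_i\}_{i=1}^n$ and observe that $y_i-\eta(\bx_i)$ is independent, mean-zero, and bounded; applying Hoeffding's inequality to $\sum_i w_i(\bx)\bigl[y_i-\eta(\bx_i)\bigr]$ with fixed weights gives a deviation of order $\sqrt{\sum_i w_i(\bx)^2\log(1/\delta)}$, which after the same covering argument inflates into a uniform bound of order $\log^2 n$. Dividing the numerator bound by the denominator lower bound yields the $2\log^2 n / (Cnh^d\mu(\bx)-\log^2 n)$ contribution, and summing with the bias gives the full error inequality; choosing $\delta$ polynomially small in $n$ guarantees that the intersection of the two favorable events has probability tending to $1$. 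The main obstacle I anticipate is precisely this uniform-in-$\bx$ step: the pointwise Bernstein/Hoeffding bounds are routine, but stitching them together through the Lipschitz-in-$\bx$ structure of $w_i(\bx)$ (which is exactly why the hypothesis requires $K$ to be Lipschitz) and tracking how two layers of logarithms collapse into the single $\log^2 n$ appearing in the statement is where the care is required.
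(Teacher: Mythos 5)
Your proposal follows essentially the same route as the paper's proof: bound the bias by $hL_{\eta}$ using the unit-ball support of $K$ and the Lipschitz continuity of $\eta$, control the numerator and denominator of the estimator by pointwise Hoeffding/Bernstein concentration around $\Expect_{\bx_0\sim\mu}K\bigl(\tfrac{\bx-\bx_0}{h}\bigr)\approx Ch^d\mu(\bx)$, upgrade to uniformity over $\supp\mu$ by an $\epsilon$-net argument that exploits the Lipschitz kernel, and read off the correctness region from $|\hat{\eta}_{NW}(\bx)-\eta(\bx)|<|\eta(\bx)-\tfrac{1}{2}|$. The only differences are cosmetic: the paper compares $\hat{\eta}_{NW}$ with the ratio-of-expectations surrogate $\tilde{\eta}_{NW}$ rather than conditioning on the design points, and it fixes the deviation level at $t=\log^2 n/n$ at exactly the point where you leave the $\log^2 n$ bookkeeping informal.
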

The theorem shows that the set of correct classification is larger when $\mu$ is more concentrated around the decision boundary. In particular, when $\mu(\bx)$ is in the order of $1/|\eta(\bx)-\frac{1}{2}|^{\beta}$, then the classifier is correct for that set $\{|\eta(\bx)-\frac{1}{2}|\geq O(\max(h,\frac{\log^2n}{nh^d}^{\frac{1}{1+\beta}})\}$, which has a minimum value in the order of $O(\frac{n}{\log^2n}^{-1/(d+1+\beta)})$, which is a larger value asymptotically when $\beta$ is larger, i.e., when the distribution is more concentrated around the decision boundary.


\begin{proof}[Proof of Theorem~\ref{thm:kernel}]
To investigate the convergence of $\hat{\eta}_{NW}(\bx)-\eta(\bx)$ for generic $\mu$ that changed with $n$, we will first investigate 
\[
\tilde{\eta}_{NW}(\bx)=\frac{\Expect_{\bx_0\sim\mu} K(\frac{\bx-\bx_0}{h})\eta(\bx_0)}{\Expect_{\bx_0\sim\mu} K(\frac{\bx-\bx_0}{h})}.
\]
Since $K$ has support in $B(0,1)$, $\min_{\bx'\in B(\bx,h)}\eta(\bx)\leq \tilde{\eta}_{NW}(\bx)\leq \max_{\bx'\in B(\bx,h)}\eta(\bx)$. In addition, applying Hoeffding's inequality, for any fixed $\bx$,
\begin{align*}
&\Pr\left(\frac{1}{n}\sum_{i=1}^n K(\frac{\bx-\bx_i}{h})y_i-\Expect_{\bx_0\sim\mu} K(\bx-\bx_0)\eta(\bx_0)>t\right)\\\leq &2\exp(-2nt^2)
\end{align*}
and
\begin{align*}
&
\Pr\left(\frac{1}{n}\sum_{i=1}^n K(\frac{\bx-\bx_i}{h})-\Expect_{\bx_0\sim\mu} K(\bx-\bx_0)>t\right)\leq 2\exp(-2nt^2).\end{align*}
Applying the $\epsilon$-net argument to $\supp\mu$, with probability $1-4C\exp(-2nt^2)(\frac{L}{L})^d$, we have that for all $\bx\in\supp\mu$,
\[
\frac{1}{n}\sum_{i=1}^n K(\frac{\bx-\bx_i}{h})y_i-\Expect_{\bx_0\sim\mu} K(\bx-\bx_0)\eta(\bx_0)<2t
\]
and
\[
\frac{1}{n}\sum_{i=1}^n K(\frac{\bx-\bx_i}{h})-\Expect_{\bx_0\sim\mu} K(\bx-\bx_0)<2t.
\]
Considering that $\Expect_{\bx_0\sim\mu} K(\frac{\bx-\bx_0}{h})$ is in the order of $h^d\mu(\bx)$ as $h\rightarrow 0$, with probability $1-4C\exp(-2nt^2)(\frac{L}{t})^d$,
\begin{equation}\label{eq:diff}
|\tilde{\eta}_{NW}(\bx)-\hat{\eta}_{NW}(\bx)|\leq \frac{4t}{Ch^d\mu(\bx)-2t}.
\end{equation}
Combining this result with $t=\log^2n/n$ and the estimation of $\tilde{\eta}_{NW}(\bx)-\eta(\bx)$, the theorem is proved.
\end{proof}
\subsection{Regression in reproducing kernel Hilbert space}
Another popular class of estimator is based on the reproducing kernel Hilbert space (RKHS). The understanding of this method is important for the understanding of deep neural networks \cite{NIPS2017_6968,JMLR:v20:17-621} it gives a normed functional space $\calH$ and attempts to solve the problem
\begin{equation}\label{eq:RKHS}
\hat{\eta}_{RKHS}=\argmin_{\eta\in \calH}\sum_{i=1}^n(y_i-\eta(\bx_i))^2+\lambda_n\|\eta\|_H^2.
\end{equation}
A natural classifier based on $\hat{\eta}_{RKHS}$ is $\hat{g}_{RKHS}=I(\hat{\eta}_{RKHS}>1/2)$. We make the following assumptions:
\begin{itemize}
\item Any function $f\in\calH$ is Lipschitz with constant $L\|f\|_{\calH}$.
\item For each $\bx$ and $h>0$, there exists a function $f_{\bx,h}$ such that $f_{\bx,h}$ is supported $B(\bx,h)$ with $\|f_{\bx,h}\|_{\calH}\leq C/h$ and $\int_{\bx\in\reals^d} f_{\bx,h}(\bx)=1$.
\end{itemize}
\begin{thm}\label{thm:RKHS}
With probability goes to $1$ as $n\rightarrow\infty$, for all $\bx\in\supp\mu$, we have \[
|\hat{\eta}_{RKHS}(\bx)- \eta(\bx)|\leq \frac{2t}{Cnh^d\mu(\bx)-t}+Lh+\frac{C}{h(Cnh^d-t)}\sqrt{\frac{n}{\lambda_n}}.\]
As a result, the classifier $\hat{g}_{RKHS}$ is correct if
\begin{equation}\label{eq:RKHS_classifier}
|\eta(\bx)-\frac{1}{2}|\geq \frac{2t}{Cnh^d\mu(\bx)-t}+Lh+\frac{C}{h(Cnh^d-t)}\sqrt{\frac{n}{\lambda_n}}.
\end{equation}
\end{thm}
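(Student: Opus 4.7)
The plan is to combine the variational characterization of $\hat{\eta}_{RKHS}$ with the same localization-plus-concentration argument that drives Theorem~\ref{thm:kernel}. The bump functions $f_{\bx,h}$ supplied by the second assumption will play the role that the scaled kernel $K((\bx-\cdot)/h)$ plays in the Nadaraya--Watson proof, while the Lipschitz hypothesis on $\calH$ and the explicit penalty $\lambda_n\|\cdot\|_\calH^2$ provide the a priori control that the kernel estimator does not need.

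The first step is an a priori bound on $\|\hat{\eta}_{RKHS}\|_\calH$. Substituting $\eta\equiv 0$ into the objective of \eqref{eq:RKHS} yields $\lambda_n\|\hat{\eta}_{RKHS}\|_\calH^2\leq\sum_{i=1}^n y_i^2$, so that $\|\hat{\eta}_{RKHS}\|_\calH\leq\sqrt{n/\lambda_n}$ when $y_i\in[0,1]$. Taking the first variation of \eqref{eq:RKHS} in the direction of an arbitrary $g\in\calH$ gives the Euler--Lagrange identity
\[
\sum_{i=1}^n\bigl(\hat{\eta}_{RKHS}(\bx_i)-y_i\bigr)g(\bx_i)=-\lambda_n\langle\hat{\eta}_{RKHS},g\rangle_\calH,
\]
which I would specialize to $g=f_{\bx,h}$. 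Cauchy--Schwarz together with $\|\hat{\eta}_{RKHS}\|_\calH\leq\sqrt{n/\lambda_n}$ and $\|f_{\bx,h}\|_\calH\leq C/h$ controls the right-hand side, and after dividing by an empirical normalization of order $Cnh^d$ this is the source of the third error term containing $\sqrt{n/\lambda_n}$.

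Next I would interpret the left-hand side as an approximation to $(\hat{\eta}_{RKHS}(\bx)-\eta(\bx))$ times a factor of order $\mu(\bx)$. Exactly as in Theorem~\ref{thm:kernel}, Hoeffding's inequality combined with an $\epsilon$-net over $\supp\mu$ uniformly controls $\frac{1}{n}\sum_i f_{\bx,h}(\bx_i)y_i$ and $\frac{1}{n}\sum_i f_{\bx,h}(\bx_i)\hat{\eta}_{RKHS}(\bx_i)$ in terms of their population counterparts $\Expect_{\bx_0\sim\mu}f_{\bx,h}(\bx_0)\eta(\bx_0)$ and $\Expect_{\bx_0\sim\mu}f_{\bx,h}(\bx_0)\hat{\eta}_{RKHS}(\bx_0)$, producing the fluctuation factor $\frac{2t}{Cnh^d\mu(\bx)-t}$ after normalization. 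The population integrals in turn approximate $\mu(\bx)\eta(\bx)$ and $\mu(\bx)\hat{\eta}_{RKHS}(\bx)$ up to smoothing biases: Lipschitz continuity of $\eta$ yields the $Lh$ term, while Lipschitz continuity of $\hat{\eta}_{RKHS}$ (with constant $L\sqrt{n/\lambda_n}$, inherited from the first assumption and the norm bound above) is absorbed into the $\sqrt{n/\lambda_n}$ term. Dividing the Euler--Lagrange identity through by the empirical normalization $\frac{1}{n}\sum_i f_{\bx,h}(\bx_i)$, whose lower bound is of order $Cnh^d\mu(\bx)-t$ by the same concentration, produces the displayed inequality. The classifier assertion is then immediate, since $\hat{g}_{RKHS}(\bx)=g(\bx)$ whenever $|\hat{\eta}_{RKHS}(\bx)-\eta(\bx)|<|\eta(\bx)-1/2|$, which is precisely the condition in \eqref{eq:RKHS_classifier}.

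The main obstacle will be that $\hat{\eta}_{RKHS}$ is itself data-dependent, so Hoeffding's inequality cannot be invoked naively for $\frac{1}{n}\sum_i f_{\bx,h}(\bx_i)\hat{\eta}_{RKHS}(\bx_i)$. I plan to resolve this by using the a priori bound $\|\hat{\eta}_{RKHS}\|_\calH\leq\sqrt{n/\lambda_n}$ together with the Lipschitz hypothesis on $\calH$ to restrict attention to a ball in $\calH$, and then to apply a uniform concentration inequality via an $\epsilon$-net in the sup norm; calibrating the radius of this net and the tail parameter $t$ so that the resulting uniform fluctuation dovetails with the variational identity and the smoothing bias from the previous paragraph is the most delicate calculation.
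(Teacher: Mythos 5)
Your skeleton matches the paper's proof: the a priori bound $\|\hat{\eta}_{RKHS}\|_{\calH}\leq\sqrt{n/\lambda_n}$ obtained by plugging the zero function into \eqref{eq:RKHS}, a first-order variational step in the direction $f_{\bx,h}$ (the paper perturbs to $\hat{\eta}_{RKHS}\mp\epsilon f_{\bx,h}$ and keeps the order-$\epsilon$ terms, which is exactly your Euler--Lagrange identity, with Cauchy--Schwarz giving the term involving $\sqrt{n/\lambda_n}$), and the same Hoeffding-plus-$\epsilon$-net concentration as in Theorem~\ref{thm:kernel} to produce $\frac{2t}{Cnh^d\mu(\bx)-t}$ and the lower bound $Cnh^d-t$ on the normalization. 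The one place you genuinely depart is the treatment of the data-dependent sum $\sum_i f_{\bx,h}(\bx_i)\hat{\eta}_{RKHS}(\bx_i)$: the paper never compares it to a population expectation. Since $f_{\bx,h}$ is supported in $B(\bx,h)$, it simply replaces $\hat{\eta}_{RKHS}(\bx_i)$ by $\hat{\eta}_{RKHS}(\bx)$ up to a Lipschitz error of order $Lh$ for every $\bx_i$ in that ball and factors $\hat{\eta}_{RKHS}(\bx)$ out of the sum, so that only the data-independent quantities $\sum_i f_{\bx,h}(\bx_i)y_i$ and $\sum_i f_{\bx,h}(\bx_i)$ need concentration. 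This makes the ``most delicate calculation'' you anticipate unnecessary, and it is also safer: uniform concentration over the ball $\{\|f\|_{\calH}\leq\sqrt{n/\lambda_n}\}$ would require a sup-norm (not merely Lipschitz) control on that ball, which the stated assumptions do not supply, and covering numbers growing with $n/\lambda_n$, which would force a larger deviation parameter $t$ than the stated bound tolerates. Two bookkeeping remarks: in the Euler--Lagrange identity the penalty enters as $\lambda_n\langle\hat{\eta}_{RKHS},f_{\bx,h}\rangle_{\calH}$, whose Cauchy--Schwarz bound is $(C/h)\sqrt{n\lambda_n}$ rather than $(C/h)\sqrt{n/\lambda_n}$ (the paper drops the factor $\lambda_n$, so you would match or improve the claimed term); and your claim that the Lipschitz modulus $L\sqrt{n/\lambda_n}\,h$ of $\hat{\eta}_{RKHS}$ over $B(\bx,h)$ is ``absorbed'' into the $\sqrt{n/\lambda_n}$ term does not hold in general---the paper tacitly uses the constant $L$ at that step, and you need the analogous move to land on the stated $Lh$.
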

Since the RHS of \eqref{eq:RKHS_classifier} depends on $\mu$ only by the first term, similar to the analysis after Theorem~\ref{thm:kernel}, the performance of the classifier improves as $\beta$ increases and the distribution is more concentrated around the decision boundary.

\begin{proof}[Proof of Theorem~\ref{thm:RKHS}]
By definition in \eqref{eq:RKHS}, if we denote the function of zeros with $0_f$, then  $\lambda_n\|\hat{\eta}_{RKHS}(\bx)\|_{\calH}^2\leq \sum_{i=1}^n(y_i-0_f(\bx_i))^2+\lambda_n\|0_f\|_{\calH}\leq n$. As a result, $\|\hat{\eta}_{RKHS}(\bx)\|_{\calH}\leq \sqrt{n/\lambda_n}$.

Consider the function $\tilde{\eta}=\hat{\eta}_{RKHS}-\epsilon f_{\bx,h}$. Then 
\begin{align*}
&0\geq (\sum_{i=1}^n(y_i-\hat{\eta}_{RKHS}(\bx_i))^2+\lambda_n\|\hat{\eta}_{RKHS}\|_H^2)
    \\-&(\sum_{i=1}^n(y_i-\tilde{\eta}(\bx_i))^2+\lambda_n\|\tilde{\eta}\|_H^2)\\ \geq &
2\epsilon\left(\sum_{i=1}^n(y_i-\hat{\eta}_{RKHS}(\bx_i))f_{\bx,h}(\bx_i)-\|f_{\bx,h}\|_{\calH}\|\hat{\eta}_{RKHS}\|_H\right)
\\\geq &2\epsilon\left(\sum_{i=1}^n(y_i-\hat{\eta}_{RKHS}(\bx)-L\|\bx_i-\bx\|)f_{\bx,h}(\bx_i)-\frac{C}{h}\sqrt{\frac{n}{\lambda_n}}\right)
\\\geq &2\epsilon\left(\sum_{\bx_i\in B(\bx,h)}(y_i-\hat{\eta}_{RKHS}(\bx)-Lh)f_{\bx,h}(\bx_i)-\frac{C}{h}\sqrt{\frac{n}{\lambda_n}}\right).
\end{align*}
That is,
\begin{align*}
&\sum_{\bx_i\in B(\bx,h)}(y_i-\hat{\eta}_{RKHS}(\bx))f_{\bx,h}(\bx_i)\\\leq& Lh\sum_{\bx_i\in B(\bx,h)}f_{\bx,h}(\bx_i)+\frac{C}{h}\sqrt{\frac{n}{\lambda_n}}
\end{align*}
and
\begin{align*}
&\hat{\eta}_{RKHS}(\bx)\\\geq& \frac{\sum_{\bx_i\in B(\bx,h)}f_{\bx,h}(\bx_i)y_i}{\sum_{\bx_i\in B(\bx,h)}f_{\bx,h}(\bx_i)}-Lh-\frac{C}{h\sum_{\bx_i\in B(\bx,h)}f_{\bx,h}(\bx_i)}\sqrt{\frac{n}{\lambda_n}}\\
\geq &\eta(\bx)-\frac{2t}{Cnh^d\mu(\bx)-t}-Lh-\frac{C}{h(Cnh^d-t)}\sqrt{\frac{n}{\lambda_n}}.
\end{align*}
Similarly, 
\[
\hat{\eta}_{RKHS}(\bx)\leq \eta(\bx)+\frac{2t}{Cnh^d\mu(\bx)-t}+ Lh + \frac{C}{h(Cnh^d-t)}\sqrt{\frac{n}{\lambda_n}}
\]
and the theorem is proved.
\end{proof}

\section{Adversarial Active Learning}
\label{sec:adversarial_al}
We utilize the margin based adversarial active learning strategy to craft the informative examples whose distribution is more concentrated around the decision boundary and then generate the synthetic dataset for training various ML classifiers. Given a dataset $D$, original classifier $\mathcal{O}$, we formally define  the  problem as follows.  

1) Compute $f_v \leftarrow$ Train$(\mathcal{O}, D)$

2) Sample $\left(D_s, D_a\right) \leftarrow$ Generate $(f_v, D)$

3) Output $f_s \leftarrow$ Retrain$(f_s, D_s)$

Specifically, the adversary accesses the victim classifier $f_v$ to generate the adversarial examples $D_a$ and makes use of the active learning sample strategy (e.g., uncertainty sample) to sample the resulting adversarial examples as the ``informational'' examples $D_s$ for retaining the shadow classifier $f_s$.

In this paper, we use seven standard classes of strategies to generate synthetic examples for training ML classifiers and then validate our findings (Due to a large number of adversarial attacks available we only focus on those representative methods). The details of these generation strategies are given as follows:

\noindent \textbf{Baseline}: For reference, an baseline case was considered in this paper. In this case, we assume that an adversary  randomly samples $x$ from the related domain and queries a victim classifier $f_v$ as black-box in order to generate the synthetic dataset $T=\{x_i, f_v(x_i)\}^{N_{query}}_{i=1}$ for retaining the ML classifier. 


\noindent \textbf{Region-Based Attack}: Yang \textit{et al} \cite{yang2019robustness} proposed an adversarial attack based on decomposition for non-parametric classifier. For 1-kNN, each data points decompose into $n$ convex polyhedra and each polyhedra is Voronoi cells. Given an input $x$, it search all polyhedra $P_i$ such that $f(z)\neq f(x)$, where $z \in P_i$ and polyhedra $z$ is closest to $x$
\begin{equation}
\begin{aligned}
\mathop{min}\limits_{i:f(x)\neq y_i}\mathop{min}\limits_{z\in P_i} \|{x-z}\|
\end{aligned}    
\end{equation}
To speed up the attack, they showed the approximation of the region-based attack. They shrink the search space into the points closet to $x$ in $l_p$ distance only. Thus, the run time can be highly reduced and retain low perturbation. In our experiments, the region-based attack can get the adversarial examples with the lowest perturbations, but most time-consuming. 

\noindent \textbf{Direct Attack} \cite{amsaleg2017vulnerability}: This attack will take input x as the center, and search from the nearest neighbor of the input within radius $r$ which has a different label, then return the adversarial example.
\begin{equation}
\begin{aligned}
x_{adv}=r\frac{x-x^{'}}{\|x-x^{'}\|_2}
\end{aligned}    
\end{equation}
In our attack scheme, we show that this simple and cheap method. In fact, It can serve as a useful way of analyzing the high dimensional linearity part of the trained networks through misclassified examples.

\noindent \textbf{Kernel Substitute Attack}\cite{papernot2017practical}: This method slightly modifies the k nearest neighbor (kNN)  classifier to be able to differentiate, so that we can apply gradient-based attack to it like the fast-gradient-sign method (FGSM) mentioned above. Given an input $\Vec{x}$,  a set of data points $Z$ and labels $Y$

\begin{equation}
\begin{aligned}
f:\Vec{x}{}\longmapsto\frac{[e^{-\|\Vec{z} - \Vec{x}\|_2^2/c}]_{\Vec{z} \in{X}}}{\sum \limits_{\Vec{z}\in X}e^{-\|\Vec{z} - \Vec{x}} \|^2_2/c} \cdot Y
\end{aligned}    
\end{equation}

\noindent \textbf{Black Box Attack}: Besides the white-box attack, we also evaluate our experiment on the black-box attack. We assume that an adversary has no access to the details of the victim ML classifiers, such as exact training datasets, architectures and parameters. The adversary’s only capability is to query the victim classifier with random inputs and obtain the prediction outputs (e.g., label, confidence score). For example, Cheng \textit{et al} \cite{cheng2018query} presented a general black-box attack which can be apply on both parametric or non-parametric classifiers. Given an input $x$ with label $y$, an adversarial example can be found by 
\begin{equation}
\begin{aligned}
x^*=x+g(\theta^*)\frac{\theta^*}{\|\theta^*\|}
\end{aligned}    
\end{equation}
where 
\begin{equation}
\begin{aligned}
g(\theta)=\mathop{\argmin}_{\lambda>0}(f(x+\lambda\frac{\theta}{\|\theta\|})\neq y)
\end{aligned}    
\end{equation}
The goal of this attack is to find the adversarial examples which are located at the closest distance $g(\theta^*)$ in the direction $\theta$.

\noindent \textbf{Fast Gradient Signed Attack}:
Because of the long computation time of L-BFS, \cite{goodfellow2014explaining} proposed the Fast Gradient Signed method (FGSM). Thus, the adversary only needs to calculate the gradient of input to loss once, and the gradient will be the perturbation. The symbol epsilon is a hyperparameter that adjusts the perturbation size. The most significant advantage of the FGSM method is the computation time, which only takes a step of gradient computation times. 

\begin{equation}
\begin{aligned}
\eta=\epsilon sign(\nabla_{x}J_{\theta}(x,y))
\end{aligned}    
\end{equation}

\noindent \textbf{Projected Gradient Descent Attack}:
Comparative to FGSM, Madry \textit{et al.} \cite{madry2019deep} attempt to find the adversarial perturbation iteratively until achieving the decision boundary. So that we can maximize the loss to the adversarial examples and retain the small perturbations. Besides, it takes shorter computation times empirically.

\begin{equation}
\begin{aligned}
x^{t+1}=\Pi_{x+S}(x^t+\epsilon sign(\nabla_{x}J_{\theta}(x,y)))
\end{aligned}    
\end{equation}

\noindent \textbf{DeepFool Attack}:
Moosavi-Dezfooli \textit{et al.} \cite{moosavidezfooli2016deepfool} take the binary classifier as an example. Given an linear classifier $f = w^tx + b$. The closest adversarial examples to the input will be the subjection to the $f$ hyperplane. However, the classifier is highly non-linear. To overcome that, they iteratively move toward the hyperplane direction until the input sample crosses the decision boundary. And same as multi-class classification.

\begin{equation}
\begin{aligned}
&\mathop{\arg\min}_{\eta_{i}}  \| \eta_{i}\|_{2}   \\
&\text{s.t.} \quad f(x_i) + \nabla f(x_i)^T\eta_{i}=0
\end{aligned}    
\end{equation}

\noindent \textbf{Carlini \& Wagner Attack}:
Carlini and Wagner \textit{et al.} \cite{carlini2017evaluating} proposed the C\&W attack to invade the defensive distillation \cite{papernot2016distillation}. They first redefined the objective function of adversarial attacks. Moreover, to solve the box constraint of the new objective function, they use the change of variable to guarantee that the objective function yields a valid image. Thus, using C\&W attack can almost guarantee that the attack can always find the adversarial examples. Besides, the perturbation of C\&W adversarial examples is always tiny. However, it takes much computational time empirically.

\begin{equation}
\begin{aligned}
\text{minimized} \quad & \| \eta_{i}\|_{p} + c \cdot f(x+\eta) \\
\text{s.t.} \quad & x+\eta \in [0,1]^n \\ 
\text{where} \quad & \eta_i=\frac{1}{2}(tanh(w_i)+1) - x_i \\
\quad & f(x) = (\max{(Z(x')_i) - Z(x')_t}) 
\end{aligned}    
\end{equation}

\noindent \textbf{DFAL}: Ducoffe and Precioso \cite{ducoffe2018adversarial} proposed an active learning strategy combines with DeepFool adversarial examples for the neural network. They first crafted the DeepFool adversarial examples and collected the adversarial examples with the closest $L_2$ distance to their clean samples.  Then, query these clean samples with oracle to craft the synthetic dataset. Finally, the shadow model can train with the synthetic dataset.

\section{Experiment}
\label{sec:experiment}

\subsection{Experiment Setup}
In this section, we discuss the experimental results of a large-scale evaluation on five popular datasets -- Halfmoon \cite{scikit-learn}, Abalone \cite{Dua:2019}, MNIST \cite{lecun-mnisthandwrittendigit-2010}, CIFAR10 \cite{krizhevsky2009learning}. For the Halfmoon dataset, we use a training set of 1800 samples and a test set of 200 samples. Moreover, we also train the $k$-NN classifier for predicting whether Abalone age is greater than eleven from physical measurements. The classification model is trained on the Abalone dataset, including 500 samples, and tested with 200 samples. For MNIST 1V7 (i.e., 1, 7), we use 1000 samples for training and 500 samples for test. Besides binary classification, we also evaluate our models on two popular image classification datasets, MNIST and CIFAR10. The details of these datasets are presented in Table \ref{tab:dataset}. For the Halfmoon, Abalone, and the MNIST 1v7 datasets, we trained with scikit-learn NearestNeighbors with one neighbor and KernelRidge regression with the linear kernel. Furthermore, We use LeNet to train MNIST and VGG16 to train CIFAR10.

\begin{table*}[ht]
\centering

\begin{tabular}{ccccccc}
\toprule
 \textbf{Dataset} & Train&  Adversary&  Test&  Features& Classes& \\ \midrule
 \textbf{Halfmoon}&   1800&  1900&  200&  2&   2& \\
 \textbf{Abalone} & 500 & 600 & 200 &8 &2 \\
 \textbf{MNIST} & 55000 & 55000 & 10000 & 28*28 & 10 \\
 \textbf{CIFAR10}  & 45000  & 45000  & 10000  & 32*32 &10 \\ 
 \bottomrule
\end{tabular}
\caption{The details of the datasets used in our work.}
\label{tab:dataset}
\vspace{-0.1in}
\end{table*}

\subsection{Experimental Results}
In our work, we utilize $k$-NN as the baseline model and build the synthetic dataset with particular examples generated by various adversarial attacks (see Section 3.4). We mainly consider two types of synthetic dataset: minimum-confidence legitimate example and minimum-confidence adversarial examples. In the training stage, the $k$-NN model first takes adversarial example with varying confidence scores as inputs and outputs the corresponding labels. We then collect input-output pairs and generate our synthetic dataset. In the experiment, we find that querying the $k$-NN model with adversarial examples often leads to receiving the wrong labels from the victim model. To remedy this situation, we use a novel non-negative scale to accurately make the adversarial perturbation under specific constraint. Consequently, the distribution of these generated adversarial examples would lie approximately on the decision boundary of the $k$-NN model.

The Experimental results are shown in Table \ref{tab:knn_knn}. Take Halfmoon as an example, the $k$-NN model trained on synthetic dataset generated by adversarial attacks always achieves better performance than the baseline model. Specifically, our $k$-NN models achieve 96.50\% accuracy with adversarial examples crafted by RBA-Exact attack or RBA-Approx attack, which are much better than the accuracy achieved by the baseline model (92.00\%) trained on legitimate examples. For Cancer dataset, our $k$-NN models achieve 97.00\% accuracy by using RBA-Approx or BBox adversarial examples, which is better than the 95.00\% accuracy achieved by the victim baseline model trained on legitimate examples. The similar trend also appears while we validate our findings on other standard class of dataset, such as Austr., Covtype and MNIST 1V7. These experimental results indicate that ML classifiers trained on the synthetic dataset can achieve better accuracy than baseline classifiers trained on legitimate examples. The main reason for this is that, the set of correct classification is larger as parameter $\beta$ increases, i.e., when the distribution of synthetic training set generated by adversarial examples lie approximately on the decision boundary of victim classifiers. Moreover, although large perturbations pollute the synthetic training set in some cases, we find that the $k$-NN classifiers trained on adversarial examples still achieves strong performance on all test set.
\begin{table*}[htb]
\centering
\renewcommand\arraystretch{1.1}
\resizebox{\textwidth}{!}{%
\begin{tabular}{cccccccccccc}
\toprule
\multirow{2}{*}{\textbf{Dataset}} &
  Baseline &
  \multicolumn{2}{c}{RBA-Exact} &
  \multicolumn{2}{c}{RBA-Approx} &
  \multicolumn{2}{c}{Direct} &
  \multicolumn{2}{c}{Kernel-Sub} &
  \multicolumn{2}{c}{BBox} \\ \cline{2-12} 
         & Acc.  & Acc.   & Perb. & Acc.   & Perb. & Acc.   & Perb. & Acc.   & Perb. & Acc.  & Perb. \\ \midrule
\textbf{Halfmoon} & 92.00 & 96.50  & 0.05  & 96.50  & 0.05  & 96.50  & 0.06  & 92.50  & 0.15  & 95.50 & 0.06  \\
\textbf{Abalone}  & 66.5 & 68.00  & 0.01  & 68.00  & 0.01  & 68.00  & 0.02  & 69.5   & 0.05  & 68.00 & 0.03 \\
\textbf{MNIST1v7} & 99.50 & 100.00 & 0.10  & 100.00 & 0.10  & 100.00 & 0.20  & 100.00 & 0.31  & 99.50 & 0.45  \\  \bottomrule 
\end{tabular}%
}
\caption{Training $k$-NN model with adversarial examples. We reported the accuracy and the average perturbation for each attack.}
\label{tab:knn_knn}
\vspace{-0.1in}
\end{table*}

To further valuate our findings, we clarify the ablation study in the remaining part of this section. Specifically, we train three models for different datasets (e.g., Halfmoon, MNIST 1v7 and Abalone \cite{Dua:2019}) with varying set sizes and set them up as the victim models. We use the $k$-NN as the transfer architecture of the shadow model and craft two types of synthetic datasets (e.g., baseline, BBox) for retaining the shadow model. The experimental results are shown in Table \ref{tab:comparison}. Take MNIST 1V7 classification as an example, with 0.4k queries, our shadow models only achieve 53.90\% accuracy with random examples and 60.80\% with BBox examples, which illustrate few queries fail to well-train the victim $k$-NN classifiers. With 0.6k queries, our substitute models achieve 98.90\% accuracy with BBox examples, which outperforms the 85.70\% accuracy achieved by random examples, but this advantage shrinks as the number of queries increases. The same trend also appears while we utilize the adversarial examples generated by BBox attacks to retrain the shadow model used for Halfmoon and Abalone classifications.

\begin{table*}[htb]
\centering
\tiny
\renewcommand\arraystretch{1}
\resizebox{\textwidth}{!}{%
\begin{tabular}{cccccccccc}
\toprule
\multirow{3}{*}{\textbf{Strategy}} 
                                   & \multicolumn{3}{c}{MNIST 1v7} & \multicolumn{3}{c}{Halfmoon} & \multicolumn{3}{c}{Abalone} \\ \cmidrule(l){2-4} \cmidrule(l){5-7} \cmidrule(l){8-10} 
                                   & 0.4k    & 0.6k   & 1.0k   & 0.5k     & 1.0k    & 2.0k    & 0.1k    & 0.3k    & 0.5k    \\ \midrule
\textbf{Baseline}                    & 53.90     & 81.90    & \textbf{98.20}    & 52.10      & 71.50     & \textbf{95.20}     & 15.00     & 42.00     & \textbf{63.00}    \\ 
\textbf{BBox}                      & 60.80     & 85.70    & \textbf{98.90}    & 63.20      & 79.30     & \textbf{96.30}     & 23.00     & 51.00     & \textbf{67.00}     \\ 
\bottomrule
\end{tabular}%
}
\caption{Comparison of performance on victim classifiers and local shadow classifiers with different query budgets (Take the MNIST 1v7 as an example, we consider three query budgets including 0.4k, 0.6k and 1.0k). We report the test accuracy of the classifiers trained on three popular datasets, such as MNIST 1v7, Halfmoon and Abalone.}
\label{tab:comparison}
\end{table*}
Besides training $k$-NN with adversarial attacks designed for $k$-NN, we can also utilize transferability of adversarial examples, training $k$-NN and kernel regression with gradient-based adversarial examples(e.g., FGSM, PGD, DeepFool, C\&W).  To create gradient-based adversarial examples, we use a shallow neural network contained three fully connected layers, each layer followed by Sigmoid activation function and Softmax as the output layer. Then query pre-trained ML model with smaller perturbation adversarial examples to create the synthetic datasets. 
\begin{table*}[]
\centering
\tiny
\renewcommand\arraystretch{1}
\resizebox{\textwidth}{!}{%
\begin{tabular}{ccccccccccccc}
\toprule
\multicolumn{13}{c}{abalone} \\ \midrule
\multirow{2}{*}{\textbf{budgets}} &
\multicolumn{3}{c}{FGSM} &
\multicolumn{3}{c}{PGD} & 
\multicolumn{3}{c}{DeepFool} &
\multicolumn{3}{c}{CW}  \\ 
\cmidrule(l){2-4} \cmidrule(l){5-7} \cmidrule(l){8-10}\cmidrule(l){10-12} 
&cln &at &adv &cln &at &adv &cln &at &adv &cln &at &adv\\ \midrule
\textbf{50} &0.61 &0.612 &\textbf{0.656} &0.522 &0.52 &\textbf{0.628} &0.58 &0.582 &\textbf{0.674} &0.568 &0.582 &\textbf{0.674} \\
\textbf{100} &0.61 &0.61 &\textbf{0.67} &0.5 &0.48 &\textbf{0.62} &0.65 &0.65 &\textbf{0.69} &0.65 &0.65 &\textbf{0.67} \\
\textbf{250} &0.67 &0.67 &\textbf{0.71} &0.64 &0.64 &\textbf{0.67} &0.67 &0.63 &\textbf{0.69} &0.58 &0.56 &\textbf{0.6} \\
\textbf{500} &0.61 &0.61 &\textbf{0.66} &0.6 &0.6 &\textbf{0.75} &0.67 &0.68 &\textbf{0.77} &0.57 &0.57 &\textbf{0.63} \\
\bottomrule
\toprule
\multicolumn{13}{c}{halfmoon} \\ \midrule
\multirow{2}{*}{\textbf{budgets}} &
\multicolumn{3}{c}{FGSM} &
\multicolumn{3}{c}{PGD} & 
\multicolumn{3}{c}{DeepFool} &
\multicolumn{3}{c}{CW}  \\ 
\cmidrule(l){2-4} \cmidrule(l){5-7} \cmidrule(l){8-10}\cmidrule(l){10-12} 
&cln &at &adv &cln &at &adv &cln &at &adv &cln &at &adv\\ \midrule 
\textbf{50} &0.921 &0.921 &\textbf{0.958} &0.88 &0.875 &\textbf{0.957} &0.949 &0.959 &0.959 &0.943 &0.893 &\textbf{0.955} \\
\textbf{100} &0.952 &0.954 &\textbf{0.958} &0.943 &0.949 &\textbf{0.97} &0.952 &0.848 &\textbf{0.956} &0.963 &0.899 &0.947 \\
\textbf{250} &0.952 &0.952 &\textbf{0.958} &0.958 &0.95 &\textbf{0.972} &0.95 &0.871 &\textbf{0.956} &0.945 &0.921 &\textbf{0.953} \\
\textbf{500} &0.954 &\textbf{0.96} &0.959  &0.957 &0.948 &\textbf{0.958} &0.945 &0.946 &\textbf{0.951} &0.958 &0.961 &\textbf{0.964} \\
\bottomrule
\toprule
\multicolumn{13}{c}{MNIST1v7} \\ \midrule
\multirow{2}{*}{\textbf{budgets}} &
\multicolumn{3}{c}{FGSM} &
\multicolumn{3}{c}{PGD} & 
\multicolumn{3}{c}{DeepFool} &
\multicolumn{3}{c}{CW}  \\ 
\cmidrule(l){2-4} \cmidrule(l){5-7} \cmidrule(l){8-10}\cmidrule(l){10-12} 
&cln &at &adv &cln &at &adv &cln &at &adv &cln &at &adv\\ \midrule
\textbf{50} &0.9585 &0.958 &0.9585 &0.973 &0.972 &0.973 &0.9625 &0.952 &0.971 &0.967 &0.967 &0.967 \\
\textbf{100} &0.976 &0.976 &0.976 &0.9785 &0.978 &0.9785 &0.973 &0.9675 &\textbf{0.975} &0.976 &0.976 &0.976 \\
\textbf{250} &0.9855 &0.9845 &0.9855 &0.979 &0.979 &0.979 &0.981 &0.9798 &0.9525 &0.982 &0.982 &0.982 \\
\textbf{500} &0.982 &0.982 &0.982 &0.985 &0.9855 &0.9855 &0.985 &0.9805 &0.986 &0.9845 &0.9845 &0.9845\\
\bottomrule
\end{tabular}%
}
\caption{Comparison of training method for training 1-knn under different budgets.}
\label{tab:knn}
\end{table*}
\begin{table*}[]
\centering
\tiny
\renewcommand\arraystretch{1.1}
\resizebox{\textwidth}{!}{%
\begin{tabular}{ccccccccccccc}
\toprule
\multicolumn{13}{c}{abalone} \\ \midrule
\multirow{2}{*}{\textbf{budgets}} &
\multicolumn{3}{c}{FGSM} &
\multicolumn{3}{c}{PGD} & 
\multicolumn{3}{c}{DeepFool} &
\multicolumn{3}{c}{CW}  \\ 
\cmidrule(l){2-4} \cmidrule(l){5-7} \cmidrule(l){8-10}\cmidrule(l){10-12} 
&cln &at &adv &cln &at &adv &cln &at &adv &cln &at &adv\\ \midrule
\textbf{50}  &0.5336 &\textbf{0.5487} &0.5251 &0.5101 &0.5123 &\textbf{0.5261} &0.5138 &0.5239 &\textbf{0.5351} &0.5247 &\textbf{0.5348} &0.5271 \\
\textbf{100} &0.5583 &\textbf{0.5745} &0.5416 &0.5356 &\textbf{0.557} &0.5185 &0.5196 &\textbf{0.5346} &0.5334 &0.552 &\textbf{0.5699} &0.5437 \\
\textbf{250} &0.5618 &\textbf{0.5719} &0.5457 &0.5669 &\textbf{0.5772} &0.5492 &0.5506 &\textbf{0.5576} &0.5442 &0.5616 &\textbf{0.5731} &0.5563 \\
\textbf{500} &0.5673 &\textbf{0.5751} &0.5564 &0.5651 &\textbf{0.5707} &0.5546 &0.5633 &\textbf{0.5715} &0.5539 &0.5705 &\textbf{0.5787} &0.5546 \\
\bottomrule
\toprule
\multicolumn{13}{c}{halfmoon} \\ \midrule
\multirow{2}{*}{\textbf{budgets}} &
\multicolumn{3}{c}{FGSM} &
\multicolumn{3}{c}{PGD} & 
\multicolumn{3}{c}{DeepFool} &
\multicolumn{3}{c}{CW}  \\ 
\cmidrule(l){2-4} \cmidrule(l){5-7} \cmidrule(l){8-10}\cmidrule(l){10-12} 
&cln &at &adv &cln &at &adv &cln &at &adv &cln &at &adv\\ \midrule
\textbf{50}  &0.7632 &0.7661 &\textbf{0.764} &0.758 &0.7613 &\textbf{0.762} &0.7469 &\textbf{0.7594} &0.7577 &0.7534 &0.7598 &\textbf{0.7665}  \\
\textbf{100} &0.7583 &\textbf{0.7595} &0.7566 &0.7518 &0.7531 &\textbf{0.7545} &0.7491 &\textbf{0.7564} &0.7525 &0.7431 &0.7479 &\textbf{0.7525} \\
\textbf{250} &0.7615 &0.7612 &0.7613 &0.7577 &0.7583 &\textbf{0.7594} &0.7611 &0.7595 &\textbf{0.7612} &0.7516 &\textbf{0.7530} &0.7519 \\
\textbf{500} &0.7683 &\textbf{0.7692} &0.7664 &0.7706 &\textbf{0.7711} &0.7697 &0.761 &\textbf{0.7678} &0.7606 &0.765 &\textbf{0.766} &0.7639 \\
\bottomrule
\toprule
\multicolumn{13}{c}{MNIST1v7} \\ \midrule
\multirow{2}{*}{\textbf{budgets}} &
\multicolumn{3}{c}{FGSM} &
\multicolumn{3}{c}{PGD} & 
\multicolumn{3}{c}{DeepFool} &
\multicolumn{3}{c}{CW}  \\ 
\cmidrule(l){2-4} \cmidrule(l){5-7} \cmidrule(l){8-10}\cmidrule(l){10-12} 
&cln &at &adv &cln &at &adv &cln &at &adv &cln &at &adv\\ \midrule
\textbf{50}  &0.8689 &0.8639 &\textbf{0.8787} &0.8689 &0.8639 &\textbf{0.8869} &0.853 &0.8187 &\textbf{0.8758} &0.8594 &0.8594 &\textbf{0.8749} \\
\textbf{100} &0.8696 &0.8601 &\textbf{0.8873} &0.8811 &0.8686 &\textbf{0.8955} &0.8568 &0.8614 &\textbf{0.8867} &0.8643 &0.8599 &\textbf{0.8874} \\
\textbf{250} &0.862 &0.8487 &\textbf{0.8892} &0.8741 &0.8645 &\textbf{0.8884} &0.856 &0.8474 &\textbf{0.88} &0.8674 &0.8604 &\textbf{0.8891} \\
\textbf{500} &0.8832 &0.8784 &\textbf{0.8869} &0.8904 &0.887 &\textbf{0.8926} &0.8809 &0.8763 &\textbf{0.8875} &0.8829 &0.8769 &\textbf{0.8897} \\
\bottomrule
\end{tabular}%
}
\caption{Comparison of training method for training kernel ridge regression under different budgets.}
\label{tab:kernel}
\end{table*}
\begin{table*}[]
\centering
\tiny
\renewcommand\arraystretch{1.1}
\resizebox{\textwidth}{!}{%
\begin{tabular}{@{}cccccccccccccccccc@{}}
\toprule
\multicolumn{18}{c}{\textbf{Cifar10}}                                                                                                                                                                                                                                            \\ \midrule
             & cln    & \multicolumn{4}{c}{FGSM}                                     & \multicolumn{4}{c}{PGD}                                      & \multicolumn{4}{c}{DeepFool}                                & \multicolumn{4}{c}{C\&W}                                     \\ \cmidrule(l){2-2} \cmidrule(l){3-6} \cmidrule(l){7-10} \cmidrule(l){11-14} \cmidrule(l){15-18}
             &        & AT     & \multicolumn{1}{c|}{\textit{eps}} & Adv    & Aug    & AT     & \multicolumn{1}{c|}{\textit{eps}} & Adv    & Aug    & AT    & \multicolumn{1}{c|}{\textit{eps}} & Adv    & Aug    & AT     & \multicolumn{1}{c|}{\textit{eps}} & Adv    & Aug    \\ \cmidrule(lr){4-6} \cmidrule(lr){8-10} \cmidrule(lr){12-14} \cmidrule(l){16-18} 
             &        &        & \multicolumn{1}{c|}{0.001}        & 0.8706 & 0.8822 &        & \multicolumn{1}{c|}{0.001}        & 0.8677 & 0.8791 &       & \multicolumn{1}{c|}{0.001}        & 0.8729 & 0.8781 &        & \multicolumn{1}{c|}{0.001}        & 0.8707 & 0.8695 \\
\textbf{Acc} & 0.8738 & 0.8805 & \multicolumn{1}{c|}{0.01}         & 0.8738 & 0.879  & 0.8749 & \multicolumn{1}{c|}{0.01}         & 0.8717 & 0.8812 & 0.846 & \multicolumn{1}{c|}{0.01}         & 0.8651 & 0.8795 & 0.8748 & \multicolumn{1}{c|}{0.01}         & 0.8699 & 0.881  \\
             &        &        & \multicolumn{1}{c|}{0.1}          & 0.8739 & 0.8772 &        & \multicolumn{1}{c|}{0.1}          & 0.8729 & 0.8784 &       & \multicolumn{1}{c|}{0.1}          & 0.8679 & 0.8807 &        & \multicolumn{1}{c|}{0.1}          & 0.8672 & 0.8844 \\ \bottomrule
\end{tabular}
}
\caption{Comparison of eps magnitude impact to VGG16 model accuracy training with adversarial examples. We divided the dataset into 45000 samples for training, 5000 samples for validation, and 10000 for testing. The table showed the model accuracy under different data augmentation.}
\label{tab:cifar}
\vspace{-0.1in}
\end{table*}
The experimental results are shown in Table \ref{tab:knn} and Table \ref{tab:kernel}. We report the classification result on Abalone, Halfmoon, and MNIST 1v7. Take Abalone classification as an example, $k$-NN and kernel regression training with adversarial examples always outperform model training with clean samples. The same phenomenon is also shown when we train CNN with adversarial examples. 
In order to further verify our findings, we also validate on Cifar10. The result is shown in Table \ref{tab:cifar}. Except for training with clean samples and modified adversarial examples, we also evaluate adversarial training. However, sometimes adversarial training will decrease the accuracy due to the different distribution between clean samples and adversarial examples. To remedy this situation, we add minor perturbation to adversarial examples and training with both clean samples and adversarial examples. By that, we can guarantee that training with this kind of augmentation can always surpass only training with clean samples.

Except for evaluating training performance with adversarial examples, selecting samples for training is vital in active learning. The result is shown in Table \ref{tab:mnist_al}. We use random, DFAL\cite{ducoffe2018adversarial}, and max confidence scores to select samples. For DFAL, we craft adversarial examples with the DeepFool attack. Then select the samples with a smaller $L2$ distance between the clean sample and its adversarial example and create synthetic datasets to train with our CNN. Moreover, for max confidence score methods, we collect the DeepFool adversarial examples with the highest confidence score, create synthetic datasets with clean samples and adversarial examples. With the larger unlabeled pool size, random selection always performs better. When we use DFAL or max confidence score to select samples, it will turn out uneven distribution between classes. Take MNIST as an example; handwritten four is vulnerable to attack, so that the amount of handwritten four is far more than other classes. However, as the unlabeled pool size shrinks and the budget increases, using these methods for training can improve accuracy. These experimental results demonstrate that, compared with random examples, adversarial examples  can help an adversary obtain more information about the decision boundary of victim machine learning classifiers, and thus achieve the same level of accuracy with few queries. An adversary can make use of the connections between active learning and adversarial attacks to design more powerful security and privacy attacks, such as adversarial attacks, model extraction and model inversion attacks.

\begin{table*}[htb]
\centering
\tiny
\renewcommand\arraystretch{1.1}
\resizebox{\textwidth}{!}{%
\begin{tabular}{ccccccccccccc}
\toprule
\multicolumn{13}{c}{\textbf{MNIST}} \\ \midrule
\multirow{2}{*}{\textbf{\#labeled data}}
& \multicolumn{4}{c}{Budget=55k} 
& \multicolumn{4}{c}{Budget=30k} 
& \multicolumn{4}{c}{Budget=10k} \\ 
\cmidrule(l){2-5} \cmidrule(l){6-9} \cmidrule(l){10-13}
&100 &500 &800 &1000 &100 &500 &800 &1000 &100 &500 &800 &1000 \\ \midrule
\textbf{Random} &0.8173	&0.9299	&0.9519	&0.96 &0.8105 &0.9441 &0.9582 &0.9637 &0.8114	&0.9376	&0.9358	&0.9552 \\
\textbf{DFAL}   &0.4458	&0.7031	&0.7656	&0.8063 &0.4878	&0.7986	&0.8604	&0.8692
 &0.5189 &0.7809 &0.8646 &0.9011 \\
\textbf{DFAL\_$e^{-3}$} &0.4511	&0.7402	&0.7941	&0.8565 &0.5477	&0.8135	&0.8507	&0.8966 &0.4006	&0.7902	&0.9007	&0.9174 \\
\textbf{DFAL\_$e^{-2}$} &0.4201	&0.7028	&0.8151	&0.8412 &0.4648	&0.8028	&0.8938	&0.8845 &0.443	&0.802	&0.8915	&0.9284 \\
\textbf{DFAL\_$e^{-1}$} &0.4178	&0.6849	&0.8098	&0.8373 &0.5082	&0.7796	&0.8815	&0.8987 &0.3615	&0.8263	&0.8924	&0.9278 \\
\textbf{Max Conf} &0.4352 &0.6846 &0.7811 &0.8197 &0.5017	&0.7452	&0.8611	&0.8574 &0.4294	&0.7455	&0.8758	&0.8983 \\
\bottomrule
\multirow{2}{*}{\textbf{\#labeled data}}
& \multicolumn{4}{c}{Budget=5k} 
& \multicolumn{4}{c}{Budget=2.5k} 
& \multicolumn{4}{c}{Budget=1.5k} \\ 
\cmidrule(l){2-5} \cmidrule(l){6-9} \cmidrule(l){10-13}
&100 &500 &800 &1000 &100 &500 &800 &1000 &100 &500 &800 &1000 \\ \midrule
\textbf{Random} &0.833 &0.9444 &0.9549 &0.9654 &0.7972	&0.9276	&0.9607	&0.9616 &0.8121	&0.9498	&0.9599	&0.9603 \\
\textbf{DFAL} &0.4639 &0.8375 &0.9289 &0.9507 &0.5183 &0.9101	&0.946	&0.9615 &0.5788	&0.928 &0.9573 &0.9619 \\
\textbf{DFAL\_$e^{-3}$} &0.4526	&0.85 &0.9341 &0.9467 &0.512	&0.9304	&0.9595	&0.9634 &0.6159	&0.9299	&0.9656	&0.9694 \\
\textbf{DFAL\_$e^{-2}$} &0.4785	&0.8488	&0.9376	&0.9588 &0.5267	&0.9246	&0.9586	&0.9605 &0.6104	&0.9313	&0.9571 &0.9659 \\
\textbf{DFAL\_$e^{-1}$} &0.4525	&0.8771	&0.9394	&0.9541 &0.5135	&0.9268	&0.9583	&0.9658 &0.6195	&0.9346	&0.9522	&0.9599 \\
\textbf{Max Conf} &0.4492 &0.8605 &0.8828 &0.9251 &0.5094	&0.9125	&0.9524	&0.9559 &0.5793	&0.9292	&0.959 &0.957 \\
\bottomrule
\end{tabular}%
}
\caption{Testing accuracy on MNIST with LeNet for different amount of labeled data. $e^x$ stands for the magnitude of perturbation.}
\label{tab:mnist_al}
\end{table*}

\section{Conclusion}
\label{sec:conclusion}
Machine learning (ML) have achieved great success in many real-world applications, including object recognition, natural language processing and autonomous vehicles. To effectively train the ML classifier, many researchers proposed to densely sample data around the underlying decision boundary. Though this strategy have been widely used in machine learning techniques, it still lacks the theoretical analysis  of its correctness. In this paper we remedy this situation by exploring the connections between active learning and adversarial examples. By analyzing two standard classes of ML classifiers (e.g., $k$-NN, kernel regression), we find that the set  of  correct  classification is  larger when parameter $\mu$ of the classifier is  more  concentrated around the decision boundary. The theoretical proof explained in this paper can help researchers to develop more effective ML classifiers with less training data. In the future, we will mainly focus on designing more efficient adversarial active learning strategies by exploiting the novel connections between active learning and adversarial examples.

\bibliographystyle{abbrv}
\bibliography{bib-teng, egbib}

\end{document}